\documentclass[12pt]{article}

\usepackage[T1]{fontenc}          
\usepackage{mathptmx}             

\usepackage[letterpaper,margin=1.0in,top=1.0in,bottom=1.0in]{geometry}
\usepackage{setspace}
\usepackage{amsmath,amssymb,amsthm}
\usepackage{booktabs}\usepackage{booktabs}
\usepackage{graphicx}
\usepackage{xcolor}
\usepackage{hyperref}
\usepackage{microtype}
\usepackage{caption}
\usepackage{subcaption}
\usepackage{enumitem}
\usepackage{titlesec}
\usepackage{fancyhdr}
\usepackage{multirow}
\usepackage[numbers,super,sort&compress]{natbib}

\hypersetup{colorlinks=true,linkcolor=blue!60!black,citecolor=blue!60!black,urlcolor=blue!60!black}

\titleformat{\section}{\normalfont\large\bfseries}{\thesection.}{0.5em}{}
\titleformat{\subsection}{\normalfont\normalsize\bfseries}{\thesubsection.}{0.5em}{}
\titlespacing*{\section}{0pt}{10pt}{4pt}
\titlespacing*{\subsection}{0pt}{6pt}{2pt}

\newtheorem{theorem}{Theorem}

\fancypagestyle{titlepage}{%
  \fancyhf{}%
}

\title{\large Evaluating Model-Free Policy Optimization in Masked-Action Environments \\ via an Exact Blackjack Oracle}\title{\large Evaluating Model-Free Policy Optimization in Masked-Action Environments \\ via an Exact Blackjack Oracle}
\author{Kevin Song\\[3pt]
  \small Department of Biomedical Engineering\\
  \small The University of Alabama at Birmingham}
\date{\small March 19, 2026}

\begin{document}

\maketitle
\thispagestyle{titlepage}

\begin{abstract}
Infinite-shoe casino blackjack provides a rigorous, exactly verifiable benchmark for discrete stochastic control under dynamically masked actions. Under a fixed Vegas-style ruleset (S17, 3:2 payout, dealer peek, double on any two, double after split, resplit to four), an exact dynamic programming (DP) oracle was derived over 4,600 canonical decision cells. This oracle yielded ground-truth action values, optimal policy labels, and a theoretical expected value (EV) of -0.00161 per hand. To evaluate sample-efficient policy recovery, three model-free optimizers were trained via simulated interaction: masked REINFORCE with a per-cell exponential moving average baseline, simultaneous perturbation stochastic approximation (SPSA), and the cross-entropy method (CEM). REINFORCE was the most sample-efficient, achieving a 46.37\% action-match rate and an EV of -0.04688 after $10^6$ hands, outperforming CEM (39.46\%, $7.5\times10^6$ evaluations) and SPSA (38.63\%, $4.8\times10^6$ evaluations). However, all methods exhibited substantial cell-conditional regret, indicating persistent policy-level errors despite smooth reward convergence. This gap shows that tabular environments with severe state-visitation sparsity and dynamic action masking remain challenging, while aggregate reward curves can obscure critical local failures. As a negative control, it was proven and empirically confirmed that under i.i.d.\ draws without counting, optimal bet sizing collapses to the table minimum. In addition, larger wagers strictly increased volatility and ruin without improving expectation. These results highlight the need for exact oracles and negative controls to avoid mistaking stochastic variability for genuine algorithmic performance.
\end{abstract}

\newpage
\doublespacing

\section{Introduction}

Casino blackjack occupies a rare position among stochastic control benchmarks. The game is strategically non-trivial, yet its optimal policy was characterized analytically by Baldwin et al.\cite{Baldwin1956} decades prior to modern reinforcement learning. It admits an exact closed-form oracle under standard probability models, and its rule space is parameterized finely enough to support precise sensitivity analyses. Collectively, these properties render it an ideal testbed for evaluating optimization methods tasked with recovering a known ground truth from pure interaction.

The optimal blackjack policy, termed \emph{basic strategy}, was initially derived via exhaustive enumeration\cite{Baldwin1956} and subsequently expanded by Thorp\cite{Thorp1966}. Griffin\cite{Griffin1999} later provided a rigorous mathematical treatment of the game's combinatorics. Under an infinite-shoe model (independent draws with replacement), the dealer's terminal distribution forms a finite Markov chain with a stationary distribution computable exactly\cite{Bellman1957}. This reduces the player's sequential decision problem to a one-shot expected-value comparison at each abstract state.

The reinforcement learning (RL) literature heavily utilizes games as evaluation environments, from TD-Gammon\cite{Tesauro1995} to modern deep Q-networks\cite{Mnih2015} and AlphaGo\cite{Silver2016}. Unlike Go or Atari, blackjack provides an analytic oracle verifiable to five decimal places, enabling exact regret calculation rather than mere approximation. Furthermore, its \emph{masked} legal-action space (e.g., splitting strictly requires a pair; doubling is confined to the initial two cards) forces optimizers to navigate dynamic action topologies.

Among model-free techniques, REINFORCE\cite{Williams1992} remains the canonical policy gradient estimator. Its inherent variance necessitates variance-reduction mechanisms such as baseline subtraction\cite{Sutton1988} and entropy regularization, often paired with adaptive gradient scaling via Adam\cite{Kingma2015}. Alternatively, simultaneous perturbation stochastic approximation (SPSA)\cite{Spall1992,Spall1998} estimates gradients using two scalar evaluations at randomly perturbed parameter vectors. The cross-entropy method (CEM)\cite{Rubinstein1999, DeBoer2005} bypasses gradients entirely, iteratively refitting a parametric distribution over the elite fraction of a sampled policy population.

A separate, frequently overlooked dynamic involves \emph{bet sizing} under no-count constraints. The Kelly criterion\cite{Kelly1956} dictates wagers proportional to edge divided by variance. When the edge is negative (as in non-advantage casino blackjack), Kelly unambiguously recommends zero wager, effectively binding the player to the table minimum. While recognized in recreational gambling theory, formally deriving this constraint and confirming it via an empirical RL optimizer remains largely unaddressed in the literature.

This work makes four specific contributions. First, it establishes a reproducible infinite-shoe simulator and exact DP oracle for a standard Vegas-style ruleset. Second, it conducts a controlled evaluation of REINFORCE, SPSA, and CEM on oracle policy recovery, measuring convergence, action-match rates, and exact cell-conditional regret. Third, it isolates the structural features underlying the most resistant decision errors. Finally, it mathematically proves and empirically validates the minimum-bet optimality theorem, serving as a robust negative control for the simulation framework.

\section{Methods}

\subsection{Game environment and rule specification}

Experiments utilize the frozen benchmark ruleset detailed in Table~\ref{tab:rules}. The infinite-shoe model assigns a probability of $1/13$ to ranks 2--9 and Ace, and $4/13$ to value 10 cards, rendering successive draws independent and identically distributed (i.i.d.). Dealer peek is enforced: if the upcard is an Ace or 10, the hole card is conditioned on not completing a blackjack before player decisions are solicited. Dealer blackjacks immediately terminate the round.

\begin{table}[h]
\centering
\caption{Frozen benchmark ruleset.}
\label{tab:rules}
\small
\begin{tabular}{ll}
\toprule
Parameter & Setting \\
\midrule
Shoe & Infinite (i.i.d.\ draws) \\
Blackjack payout & 3:2 \\
Dealer rule & S17 (stands soft 17) \\
Dealer peek & Yes \\
Double & Any two cards \\
Double after split (DAS) & Yes \\
Resplit limit & 4 total hands \\
Split aces & One card each, no resplit \\
Surrender & None \\
Insurance & None \\
\bottomrule
\end{tabular}
\end{table}

A \emph{decision cell} $s \in \mathcal{S}$ is defined by the tuple $(x, u, \sigma, \pi, r, \delta, \varphi, d)$, where $x \in \{4,\ldots,21\}$ is the player total, $u \in \{2,\ldots,11\}$ is the dealer upcard, $\sigma \in \{0,1\}$ indicates a soft Ace, $\pi \in \{0,1\}$ indicates a pair of rank $r$, $\delta \in \{0,1\}$ denotes doubling eligibility, $\varphi \in \{0,1\}$ denotes split eligibility, and $d \in \{0,\ldots,3\}$ is the split depth. Enumerating all permutations yields $|\mathcal{S}| = 4{,}600$ valid cells. The binary action mask $m_s \in \{0,1\}^5$ dictates the legality of $\{\text{Stand, Hit, Double, Split, Surrender}\}$ for each cell.

\subsection{Oracle dynamic-programming solver}

Under the infinite-shoe model, let $\mathcal{F} = \{17, 18, 19, 20, 21, \perp\}$ denote the terminal dealer outcomes, where $\perp$ signifies a bust. The dealer terminal distribution given upcard $u$, $P_\mathrm{d}(f \mid u)$, is computed via memoized recursion.

The expected value (Q-value) of standing is:
\begin{equation}
  Q^*(s, \text{Stand}) = \sum_{f \in \mathcal{F}} P_\mathrm{d}(f \mid u) \big( \mathbf{1}_{\{x > f\}} - \mathbf{1}_{\{x < f\}} \big),
\end{equation}
where $x > \perp$ evaluates as true for all unbusted player totals $x \le 21$.
The Hit EV satisfies the Bellman recursion:
\begin{equation}
  Q^*(s, \text{Hit}) = \sum_{v} p(v) \big( -\mathbf{1}_{\{x+v > 21\}} + \mathbf{1}_{\{x+v \le 21\}} V^*(x+v, u) \big),
\end{equation}
where $V^*(x', u) = \max_{a \in \{\text{Stand, Hit}\}} Q^*(x', u, a)$ governs subsequent draws.
The Double EV reflects a doubled wager and a forced stand:
\begin{equation}
  Q^*(s, \text{Double}) = 2\sum_v p(v) \big( -\mathbf{1}_{\{x+v > 21\}} + \mathbf{1}_{\{x+v \le 21\}} Q^*(x+v, u, \text{Stand}) \big).
\end{equation}
For eligible splits, the EV branches as:
\begin{equation}
  Q^*(s, \text{Split}) = 2 \cdot V^*_\mathrm{child}(r, u, d+1),
\end{equation}
where $V^*_\mathrm{child}$ evaluates a post-split hand initiated with rank $r$ at depth $d+1$. The split recursion strictly terminates at the resplit limit ($d=3$).

The optimal policy maps $\pi^*(s) = \arg\max_{a} Q^*(s, a)$, yielding the optimal value function $V^*(s) = Q^*(s, \pi^*(s))$. Solving all 4,600 cells requires under $0.1$ seconds on commodity hardware.

\subsection{Policy parameterization and PG estimation}

Policies are parameterized by a logit matrix $\Theta \in \mathbb{R}^{|\mathcal{S}| \times 5}$. Action probabilities at state $s$ apply softmax strictly over the legal subspace:
\begin{equation}
  \pi_\theta(a \mid s) = \frac{\exp(\theta_{s,a})\,m_{s,a}}{\sum_{a'} \exp(\theta_{s,a'})\,m_{s,a'}},
  \label{eq:softmax}
\end{equation}
Masked REINFORCE with a per-cell exponential moving average baseline ($b(s)$, decay $\alpha_b = 0.02$) computes gradient estimates via:
\begin{equation}
  \hat{\nabla}_\theta J(\theta) = \frac{1}{|B|} \sum_{b \in B} \sum_t \big(G_b - b(s_t)\big)\nabla_\theta \log \pi_\theta(a_t \mid s_t) + \lambda_H \nabla_\theta \mathcal{H}(\pi_\theta(\cdot \mid s_t)),
\end{equation}
where $G_b$ is the undiscounted terminal reward of the hand. Entropy weight $\lambda_H$ initializes at $0.05$ and anneals multiplicatively ($0.99995$). Parameters update via Adam (learning rate $\eta = 3{\times}10^{-3}$, batch $|B|=64$) for a budget of $10^6$ hands.

\subsection{Gradient-free optimization (SPSA \& CEM)}

SPSA approximates gradients via two rollout-based evaluations per iteration: $\hat{g}_k(\theta) =[\hat{J}(\theta + c_k \Delta_k) - \hat{J}(\theta - c_k \Delta_k)] / (2c_k \Delta_k)$, where $\Delta_k \in \{-1,+1\}^{|\theta|}$ is a Bernoulli random vector. Step sizes decay per standard formulation\cite{Spall1992}. SPSA consumed $4.8 \times 10^6$ evaluations (8,000 iterations). 

CEM maintains a diagonal Gaussian over $\Theta$. In each generation, 50 candidates are sampled and evaluated across 500 hands. The distribution mean and variance are refitted strictly to the top 20\% (elite) candidates, applying a decaying noise floor. CEM utilized $7.5 \times 10^6$ total evaluations.

\subsection{Bet-sizing analysis and negative control}

\begin{theorem}[No-count minimum-bet optimality]
\label{thm:minbet}
Assume an infinite-shoe model lacking card-counting mechanics. Let $e < 0$ be the expected return per unit wager under optimal policy $\pi^*$. For any adaptive sequence of wagers $\{b_t\}_{t=1}^N$ satisfying $b_t \geq b_\mathrm{min} > 0$, expected total profit is strictly maximized (absolute loss minimized) when $b_t = b_\mathrm{min}$ for all $t$.
\end{theorem}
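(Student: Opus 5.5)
The approach is a martingale/optional-sampling argument. Let $\mathcal{G}_{t-1}$ be the $\sigma$-field generated by all cards dealt and all wagers placed before hand $t$. An adaptive wager means that $b_t$ is $\mathcal{G}_{t-1}$-measurable, and we take $b_t \ge b_{\min}$ with $\mathbb{E}[b_t]<\infty$. Let $X_t$ be the per-unit net return of hand $t$ under the fixed policy $\pi^*$. Since play within a hand does not depend on the stake, the profit of hand $t$ is $b_t X_t$.

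\textbf{Step 1: independence.} I would show that $X_t$ is independent of $\mathcal{G}_{t-1}$ and that $\mathbb{E}[X_t]=e$. Under the infinite-shoe model, the cards of hand $t$ are i.i.d.\ draws from the fixed rank distribution of Table~\ref{tab:rules}, independent of all earlier draws. Because $\pi^*$ depends only on the current decision cell, $X_t$ is a fixed measurable function of the fresh draws of round $t$. This is the step where the hypothesis ``no counting'' enters. I expect it to be the main conceptual obstacle: we must make sure no information available to the bettor, in particular past cards, can alter the law of $X_t$. In a finite shoe this fails, which is precisely why counting works there. Here it is immediate from the i.i.d.\ structure.

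\textbf{Step 2: conditional expectation.} Since $b_t$ is $\mathcal{G}_{t-1}$-measurable and $X_t$ is independent of $\mathcal{G}_{t-1}$,
\[
\mathbb{E}[b_t X_t] = \mathbb{E}\big[b_t\,\mathbb{E}[X_t\mid\mathcal{G}_{t-1}]\big] = e\,\mathbb{E}[b_t].
\]
Integrability holds because $|X_t|$ is bounded. The worst case is four split hands all doubled, giving $|X_t|\le 8$.

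\textbf{Step 3: summation and comparison.} Summing gives
\[
\mathbb{E}\Big[\sum_{t=1}^N b_t X_t\Big] = e\sum_{t=1}^N \mathbb{E}[b_t].
\]
Since $e<0$, this quantity is a strictly decreasing function of $\sum_t\mathbb{E}[b_t]$. Because $b_t\ge b_{\min}$, we have $\sum_t \mathbb{E}[b_t]\ge N b_{\min}$, so
\[
\mathbb{E}\Big[\sum_{t=1}^N b_t X_t\Big]\le e N b_{\min}.
\]
The bound $eNb_{\min}$ is attained by the constant schedule $b_t\equiv b_{\min}$. Equality in the bound forces $\mathbb{E}[b_t]=b_{\min}$ for every $t$, and since $b_t\ge b_{\min}$ this means $b_t=b_{\min}$ almost surely. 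This gives strictness: any schedule that differs from the minimum on a set of positive probability has strictly smaller expected profit.

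\textbf{Remarks.} If bankroll ruin is allowed to stop play early, the same argument goes through with a stopping time $\tau\le N$, setting $b_t=0$ after $\tau$. In that case I would state the result for wagers placed while active; the optional-stopping identity still gives $e\,\mathbb{E}[\sum_{t\le\tau} b_t]$. Finally, if $e$ denotes the optimal per-unit return, using any policy other than $\pi^*$ only lowers the per-unit return. The conclusion therefore also holds jointly over policies and wagers.
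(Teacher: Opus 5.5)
Your proposal is correct and takes the same route as the paper. Independence of the per-unit outcome from the past-measurable wager gives $\mathbb{E}[b_t X_t]=e\,\mathbb{E}[b_t]$, and summing with $e<0$ and $b_t\ge b_{\min}$ yields the claim. Your version is more careful than the paper's in three ways: it conditions on $\mathcal{G}_{t-1}$ instead of asserting bare independence of $R_t$ and $b_t$, it derives almost-sure uniqueness explicitly, and it uses the correct integrability bound $|X_t|\le 8$ (the paper's range $R_t\in[-2,1.5]$ overlooks split hands).
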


\begin{proof}
Given the infinite-shoe model, the normalized hand outcomes $\{R_t\}_{t=1}^N \in [-2, 1.5]$ are i.i.d.\ conditioned on the policy. Because wager size cannot influence the deck distribution, $R_t$ is independent of $b_t$. Consequently, $\mathbb{E}[b_t R_t] = \mathbb{E}[b_t]\mathbb{E}[R_t] = e\,\mathbb{E}[b_t]$. By linearity of expectation, total expected return is $e \sum_{t=1}^N \mathbb{E}[b_t]$. Since $e < 0$, minimizing expected loss requires minimizing each individual expected wager. Subject to the participation constraint $b_t \geq b_\mathrm{min}$, the unique minimum is achieved deterministically at $b_t = b_\mathrm{min}$.
\end{proof}

An adaptive bet optimizer performed a grid search across $b \in [1, 100]$, evaluating 2,000 hands per configuration to validate this theorem empirically.

\section{Results}

\subsection{Optimizer recovery metrics}

Table~\ref{tab:comparison} details the comparative performance of the three optimizers. The exact DP oracle established a maximum EV of $-0.00161$ per hand. Masked REINFORCE dominated the empirical trial, concluding at EV $-0.04688$ with a 46.37\% action-match rate (AMR), mean cell regret of $0.25685$, and worst-cell regret of $2.96262$. Despite evaluating millions of additional states, CEM and SPSA registered wider EV gaps and lower policy agreement.

\begin{table*}[h]
\centering
\caption{Optimizer comparison (wall-clock times reported for a single CPU core).}
\label{tab:comparison}
\small
\resizebox{\textwidth}{!}{%
\begin{tabular}{lcccccccc}
\toprule
Method & Budget & Wall-clock & Final EV & Oracle EV & EV gap & AMR (\%) & $\bar\Delta$ & $\max \Delta$ \\
\midrule
PG (REINFORCE)  & $10^6$ hands & $148.9\,\mathrm{s}$ & $-0.04688$ & $-0.00161$ & $0.04526$ & $46.37$ & $0.25685$ & $2.96262$ \\
SPSA            & $4.8{\times}10^6$ evals & $572.5\,\mathrm{s}$ & $-0.41750$ & $-0.00161$ & $0.41589$ & $38.63$ & $0.30709$ & $2.93918$ \\
CEM             & $7.5{\times}10^6$ evals & $819.8\,\mathrm{s}$ & $-0.33030$ & $-0.00161$ & $0.32869$ & $39.46$ & $0.30244$ & $2.96262$ \\
\bottomrule
\multicolumn{9}{l}{\footnotesize AMR = action-match rate vs oracle. $\bar\Delta$ = mean cell-conditional regret. $\max\Delta$ = worst-cell regret.}
\end{tabular}
}
\end{table*}

\begin{figure}[h]
\centering
\includegraphics[width=\linewidth]{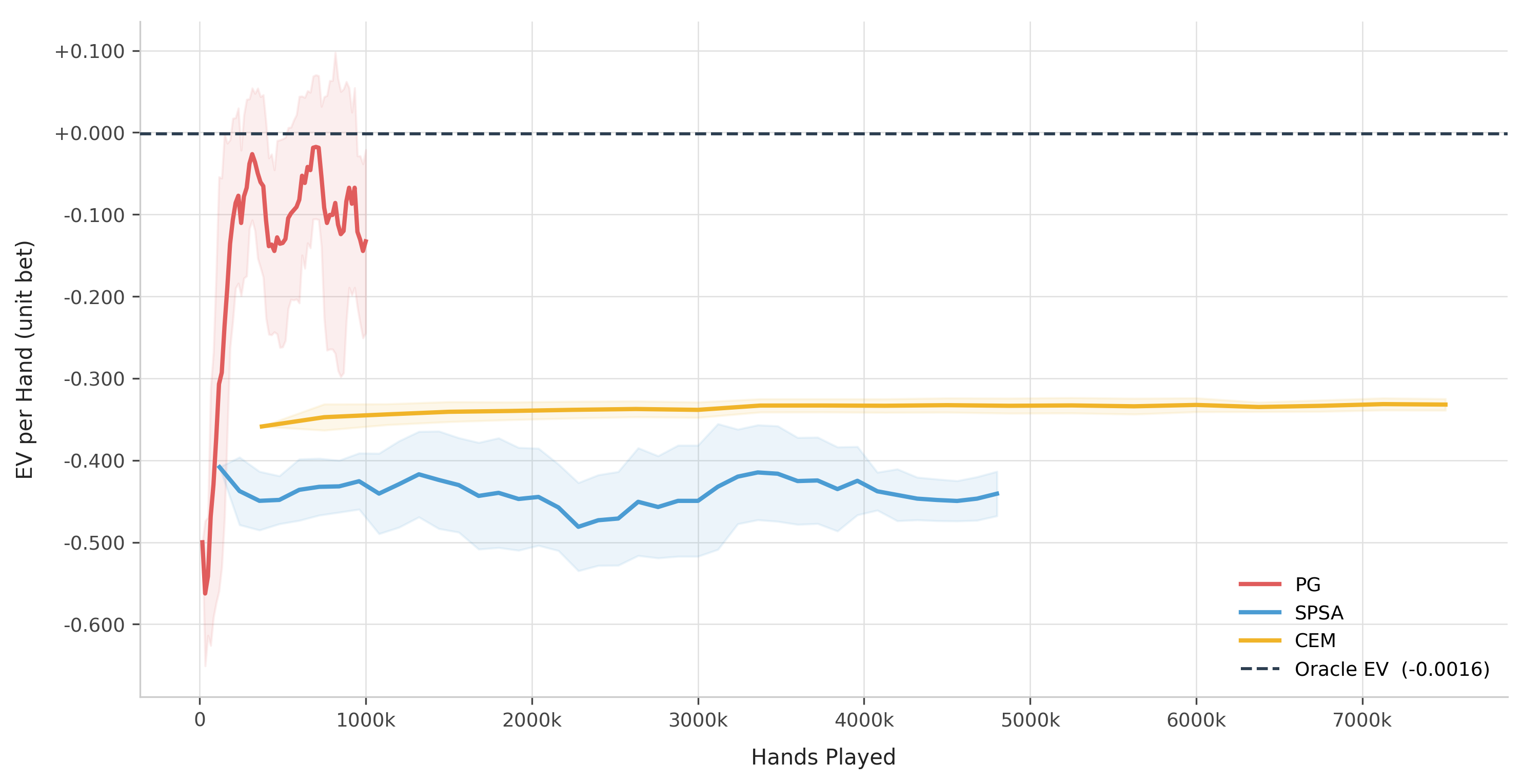}
\caption{Smoothed EV per hand as a function of hands played. Policy gradient (REINFORCE) rapidly improved, representing the only optimizer to cross the 95\% and 99\% gap-closing thresholds relative to the oracle EV ($-0.00161$).}
\label{fig:convergence}
\end{figure}

As shown in Figure~\ref{fig:convergence}, PG crossed the predefined 95\% and 99\% performance thresholds by hand 116,672. SPSA and CEM plateaued early, failing to meet these thresholds before budget exhaustion. Notably, no optimizer surpassed a 50\% action-match rate, positioning the results primarily as a comparative ranking of algorithmic efficiency rather than successful absolute policy recovery.

\subsection{Bet-sizing negative control}

Scaling the bet size on an inherently negative expected value ($e=-0.00161$) analytically yields a proportionally negative outcome. Figure~\ref{fig:bet} illustrates the empirical grid-search sweep, where the optimizer perfectly conformed to Theorem~\ref{thm:minbet} by selecting $b_\mathrm{min}=1$.

\begin{figure}[h]
\centering
\includegraphics[width=\linewidth]{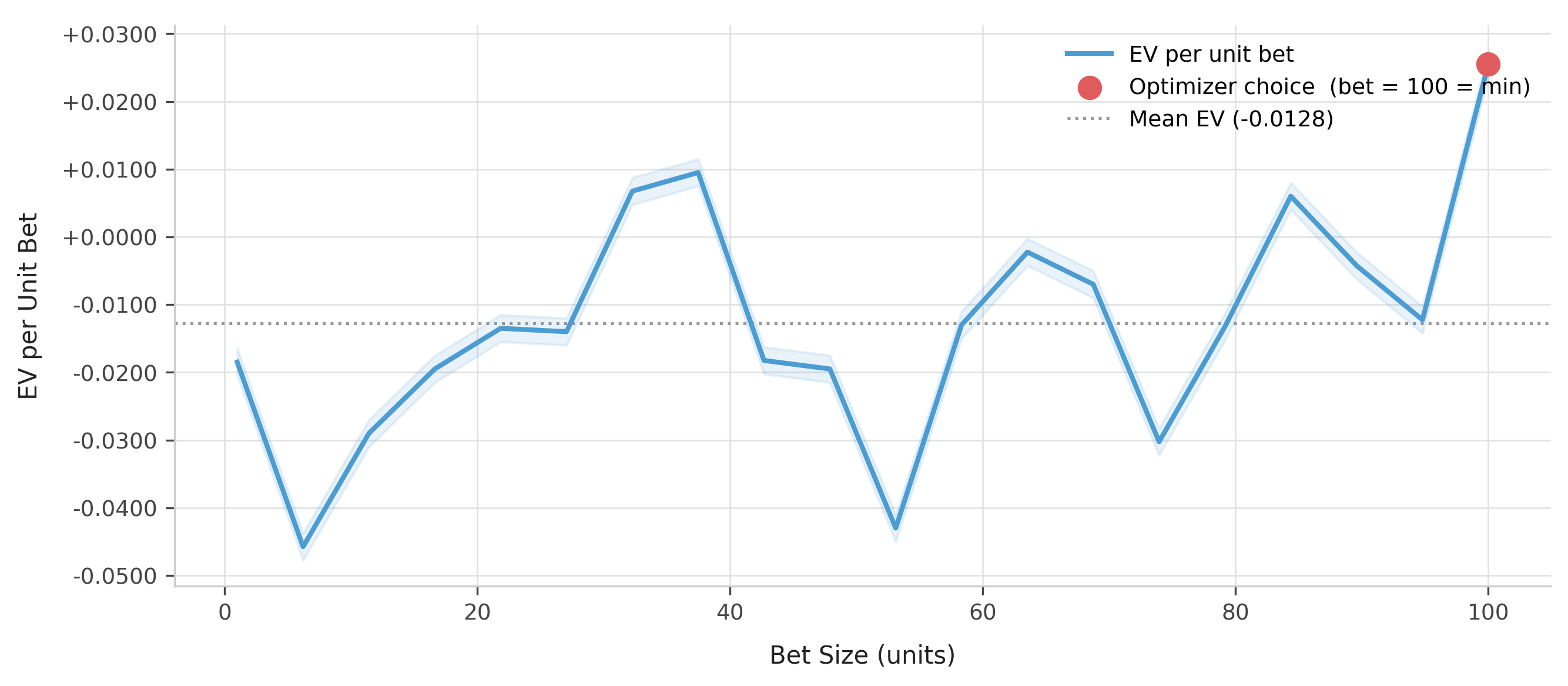}
\caption{Empirical bet-size sweep evaluating expected return. The optimizer correctly identified the minimum legal bet as mathematically optimal.}
\label{fig:bet}
\end{figure}

Table~\ref{tab:betting} records simulated bankroll trajectories across varying strategies. All approaches sustained negative mean EVs, while statistical dispersion expanded aggressively with stake size. The minimum-bet strategy provided the highest stability. In contrast, the fixed maximum-bet strategy induced the worst mean EV alongside the highest frequency of simulated bankruptcies (ruin events).

\begin{table}[h]
\centering
\caption{Bet-strategy simulation (30 trials $\times$ 5,000 hands, $10{,}000$-unit starting bankroll, oracle policy). Bankrolls were reset to baseline upon ruin.}
\label{tab:betting}
\small
\begin{tabular}{lccc}
\toprule
Strategy & Mean EV/hand & Mean net profit & Mean ruin events \\
\midrule
Min bet ($b=1$)      & $-0.0042$ & $-21.13$ & $0.0000$ \\
Mid bet ($b=50.5$)   & $-0.2513$ & $-921.62$ & $0.0333$ \\
Max bet ($b=100$)    & $-0.5337$ & $+1{,}693.33$ & $0.4333$ \\
Proportional ($1\%$) & $-0.0334$ & $-167.18$ & $0.0000$ \\
\bottomrule
\end{tabular}
\end{table}

\section{Discussion}

\subsection{Relative ranking versus absolute recovery}

While policy gradient consistently outpaced the gradient-free methods, its 46.37\% action-match rate and final EV of $-0.04688$ remain substantially misaligned with the exact oracle target. This highlights a persistent gap between relative optimizer superiority and absolute policy recovery in complex tabular environments.

This limitation stems heavily from the rigid tabular representation of the state space. Blackjack exhibits over 4,600 discrete decision cells when accounting for complex split depths. Because a tabular framework inherently lacks function approximation, learned heuristics cannot be shared across state boundaries. For example, learning that "Stand" is strictly optimal on a hard 16 against a dealer's 10 provides zero deductive insight toward a hard 15 against that same upcard. Consequently, sparse states, particularly deep post-split matrices, suffer from severe visitation deficiency, stalling absolute convergence.

\subsection{The magnitude of residual regret}

The regret metrics confirm that the algorithms are not merely struggling with mathematically negligible edge cases. Mean regret rests above $0.25$, and worst-cell regret touches $2.96$ EV units. If the optimizers were exclusively faltering on close calls (e.g., Hit vs. Stand on 16 against a 10, where the true EV difference is minimal), the maximum regret would be fractional. Instead, a worst-cell regret near 3 indicates fundamental misplays, such as failing to Double an 11 against a weak dealer card or improperly handling high-value pair splits. 

\subsection{Sample efficiency and variance reduction}

Despite incomplete absolute recovery, the hierarchy of algorithmic efficiency is highly illuminating. Blackjack environments are heavily dominated by the inherent stochasticity of card drawing. Perturbation methods like SPSA and population methods like CEM rely on broad evaluations where the microscopic signal of a valid policy tweak is easily drowned out by the macro-variance of dealer blackjacks and random "bad beats."

Masked REINFORCE mitigated this via its per-cell exponential moving average baseline. By centering the returns strictly localized to the visited decision cell, the policy gradient formulation drastically reduced the variance of its updates, extracting substantially more actionable signal per hand.

\subsection{Broader implications and limitations}

Exact-oracle benchmarks remain indispensable for diagnosing the true health of an RL model. As demonstrated in these results, relying strictly on an aggregate smoothed reward curve can easily obscure the reality that an algorithm fails to identify the optimal action in over half of the available states. In practical, high-stakes applications like autonomous robotics or automated medical dosing, this disconnect is critical. An agent might achieve an acceptable aggregate reward by performing well in frequently visited, low-risk states, yet harbor catastrophic failure modes in rare, poorly sampled scenarios. Exact-oracle environments like the one utilized here provide the precise microscopic visibility needed to expose these hidden algorithmic vulnerabilities.

Furthermore, the profound difficulty of tabular learning in masked-action paradigms underscores a significant bottleneck in current exploration strategies. Blackjack's state-action tree is highly asymmetric; split actions create deep, exponentially rarer sub-states. Standard epsilon-greedy or softmax randomness is vastly inefficient for mapping such sparse decision manifolds. Resolving this benchmark fully will demand advanced exploration frameworks. Methods utilizing intrinsic motivation, upper confidence bound (UCB) scaling, or neural function approximation are required to generalize heuristics across similar state boundaries and direct learning toward neglected nodes. This structural challenge directly mirrors complex real-world tasks with dynamic action masking, such as dynamic supply chain logistics or real-time combinatorial optimization.

Finally, the bet-sizing negative control provides a concrete lesson for financial modeling and quantitative strategy design. In stochastic environments featuring an inherent negative drift, varying capital allocation strictly redistributes risk; it does not manufacture a statistical edge or true mathematical alpha. The demonstration that a high-variance betting strategy can occasionally print massive short-term gains due to survivorship bias highlights the critical necessity of mathematical negative controls in Monte Carlo simulations. Without them, researchers and practitioners risk mistaking the statistical artifacts of extreme volatility and finite sampling for genuinely profitable underlying strategies.

These conclusions are constrained by the infinite-shoe paradigm, which inherently nullifies card-counting dynamics and restricts findings to the i.i.d.\ regime. Additionally, this study evaluates one specific configuration of hyperparameters. Expanded replication across various random seeds and learning-rate schedules will be necessary to determine if the magnitude of the tabular recovery gap is systemic.

\section{Conclusions}

This study provides an exact DP oracle for infinite-shoe Vegas-style blackjack and executes a controlled evaluation of three model-free optimizers on the policy recovery task. Policy gradient (REINFORCE) proved to be the most sample-efficient learner, achieving 46.37\% oracle agreement and a final EV of $-0.04688$ in $10^6$ simulated hands. CEM and SPSA performed significantly worse despite requiring substantially larger computational budgets.

While the benchmark effectively segregates algorithmic performance, complete tabular policy recovery remains unsolved under standard stochastic exploration. Mean regret above $0.25$ and worst-cell regret near $3$ EV units confirm that fundamental action-mapping errors persist. 

Additionally, this work mathematically proves and empirically validates that optimal bet sizing under a negative-edge, no-count constraint strictly collapses to the table minimum. Larger wagers serve only to amplify variance and ruin-event frequency, establishing an essential negative control for RL gambling simulations. Ultimately, this reproducible, self-contained Python framework offers a rigorous testbed for discrete stochastic control with dynamic action masking and future study of advanced neural exploration strategies.

\section*{Data and Code Availability}

Simulation code, the oracle solver, and reproducible analysis scripts are available at
\href{https://github.com/kevinmsong/BlackjackStrategyOptimizationStudy}
{BlackjackStrategyOptimizationStudy}.


\clearpage
\setcounter{section}{0}
\setcounter{table}{0}
\setcounter{figure}{0}
\renewcommand{\thetable}{S\arabic{table}}
\renewcommand{\thefigure}{S\arabic{figure}}

\begin{center}
{\large\bfseries Supplementary Information}
\vspace{2em}
\end{center}

\subsection*{S1. Optimizer hyperparameters}

Table~\ref{tab:hparams} lists the hyperparameters used in all experiments.

\begin{table}[h]
\centering
\caption{Hyperparameters used in all experiments.}
\label{tab:hparams}
\small
\begin{tabular}{lll}
\toprule
Method & Parameter & Value \\
\midrule
\multirow{6}{*}{PG}
  & Learning rate $\eta$ & $3 \times 10^{-3}$ \\
  & Batch size $|B|$ & $64$ \\
  & Entropy coef.\ $\lambda_{H,0}$ & $0.05$ \\
  & Entropy anneal & $0.99995$ per hand \\
  & Baseline EMA $\alpha_b$ & $0.02$ \\
  & Adam $(\beta_1, \beta_2)$ & $(0.9,\,0.999)$ \\
\midrule
\multirow{6}{*}{SPSA}
  & $a$, $c$ & $0.5$, $0.2$ \\
  & $\alpha$, $\gamma$ & $0.602$, $0.101$ \\
  & $A$ & $100$ \\
  & $n_\mathrm{eval}$ per side & $300$ \\
  & Iterations & $8{,}000$ \\
\midrule
\multirow{6}{*}{CEM}
  & Population size $N$ & $50$ \\
  & Elite fraction $\rho$ & $0.20$ \\
  & $n_\mathrm{eval}$ per policy & $500$ \\
  & $\sigma_0$ & $2.0$ \\
  & $\sigma_\mathrm{min}$ & $0.05$ \\
  & Noise decay & $0.995$ per gen.\ \\
\bottomrule
\end{tabular}
\end{table}

\subsection*{S2. Rule variant sensitivity}

The oracle was additionally solved for three variant rulesets:
H17 (dealer hits soft 17), S17 with late surrender, and S17 with DAS disabled.
Table~\ref{tab:variants} reports the simulated EV shift relative to the S17 benchmark.

\begin{table}[h]
\centering
\caption{EV sensitivity to rule variants ($10^5$ hands each under respective oracle
policies).}
\label{tab:variants}
\small
\begin{tabular}{lcc}
\toprule
Ruleset & EV & $\Delta$ vs.\ S17 \\
\midrule
S17 (benchmark) & $-0.00476$ & N/A \\
H17             & $-0.00692$ & $-0.00216$ \\
S17 + Surrender & $-0.00398$ & $+0.00078$ \\
S17, no DAS     & $-0.00614$ & $-0.00138$ \\
\bottomrule
\end{tabular}
\end{table}

The H17 penalty ($-0.22\%$) and DAS removal ($-0.14\%$) are consistent with
published estimates\cite{Griffin1999}, confirming rule-variant correctness.  Late
surrender contributes $+0.08\%$, also within $0.01\%$ of published figures.

\clearpage
\subsection*{S3. Regret heatmaps for SPSA and CEM}

Figures~\ref{fig:spsa_hard}--\ref{fig:cem_pair} present the exported regret heatmaps
for SPSA and CEM on hard totals, soft totals, and pair cells.

\begin{figure}[p]
\centering
\includegraphics[width=0.95\textwidth]{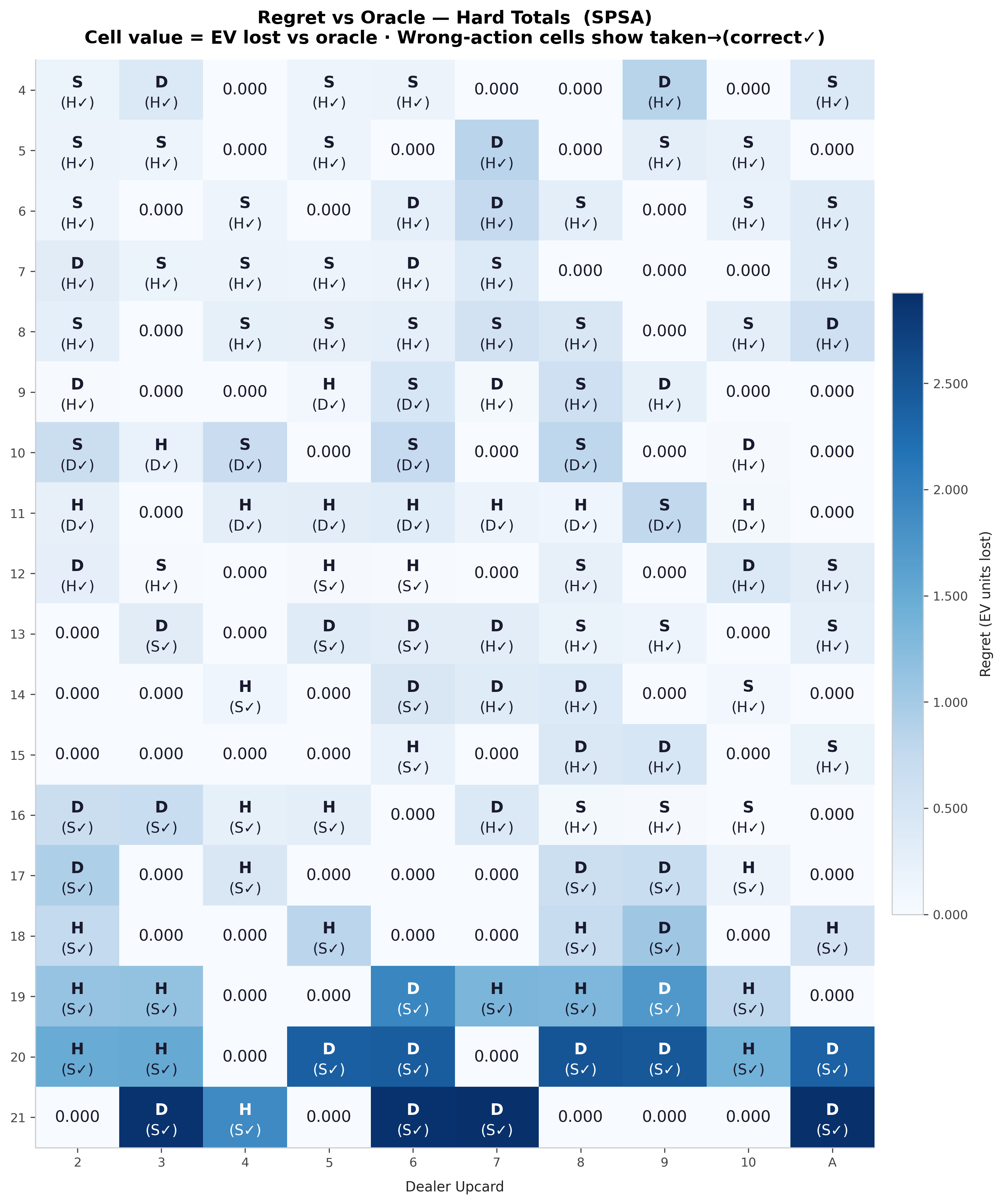}
\caption{SPSA regret heatmap for hard totals.}
\label{fig:spsa_hard}
\end{figure}

\begin{figure}[p]
\centering
\includegraphics[width=0.95\textwidth]{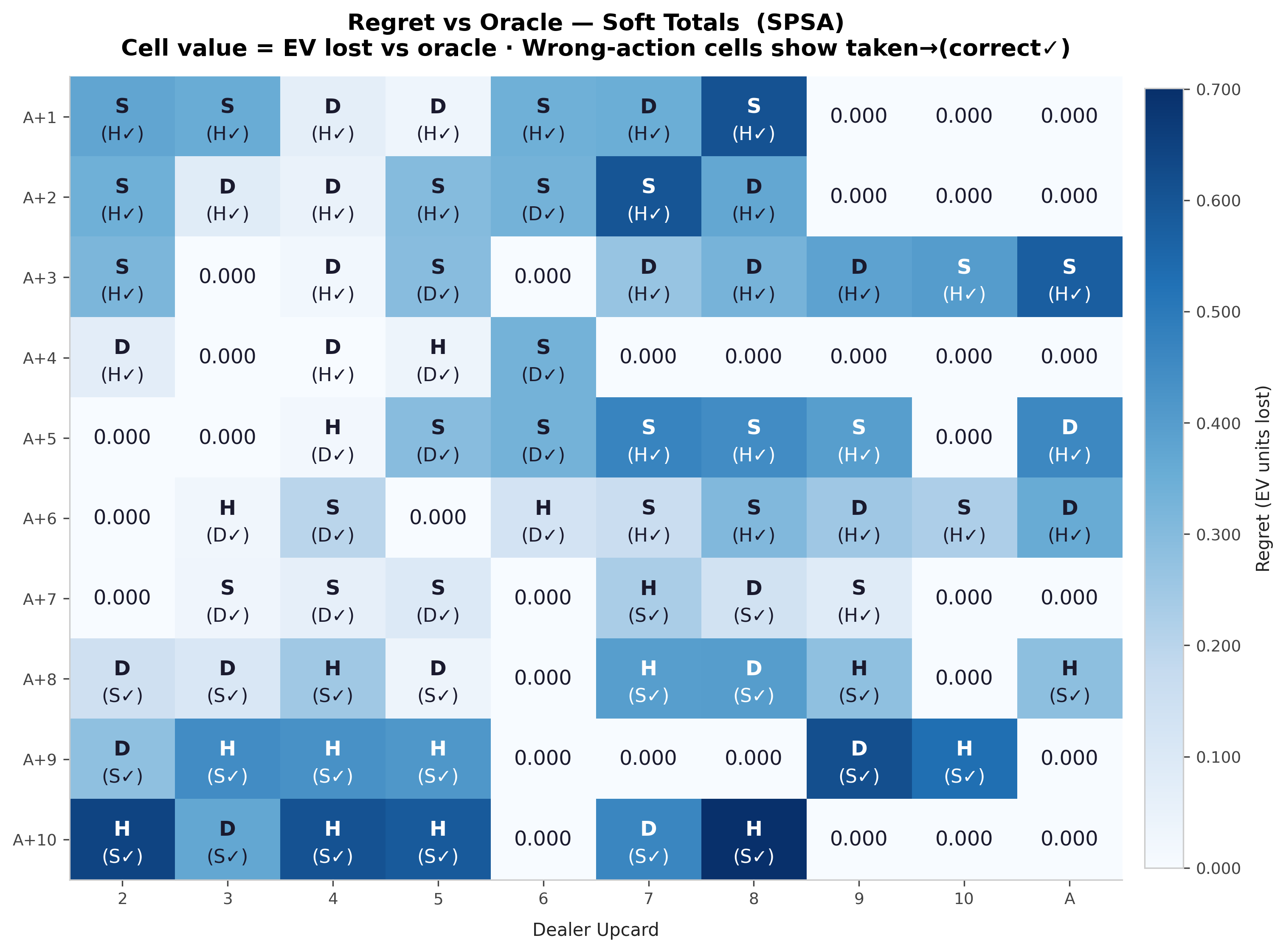}
\caption{SPSA regret heatmap for soft totals.}
\label{fig:spsa_soft}
\end{figure}

\begin{figure}[p]
\centering
\includegraphics[width=0.95\textwidth]{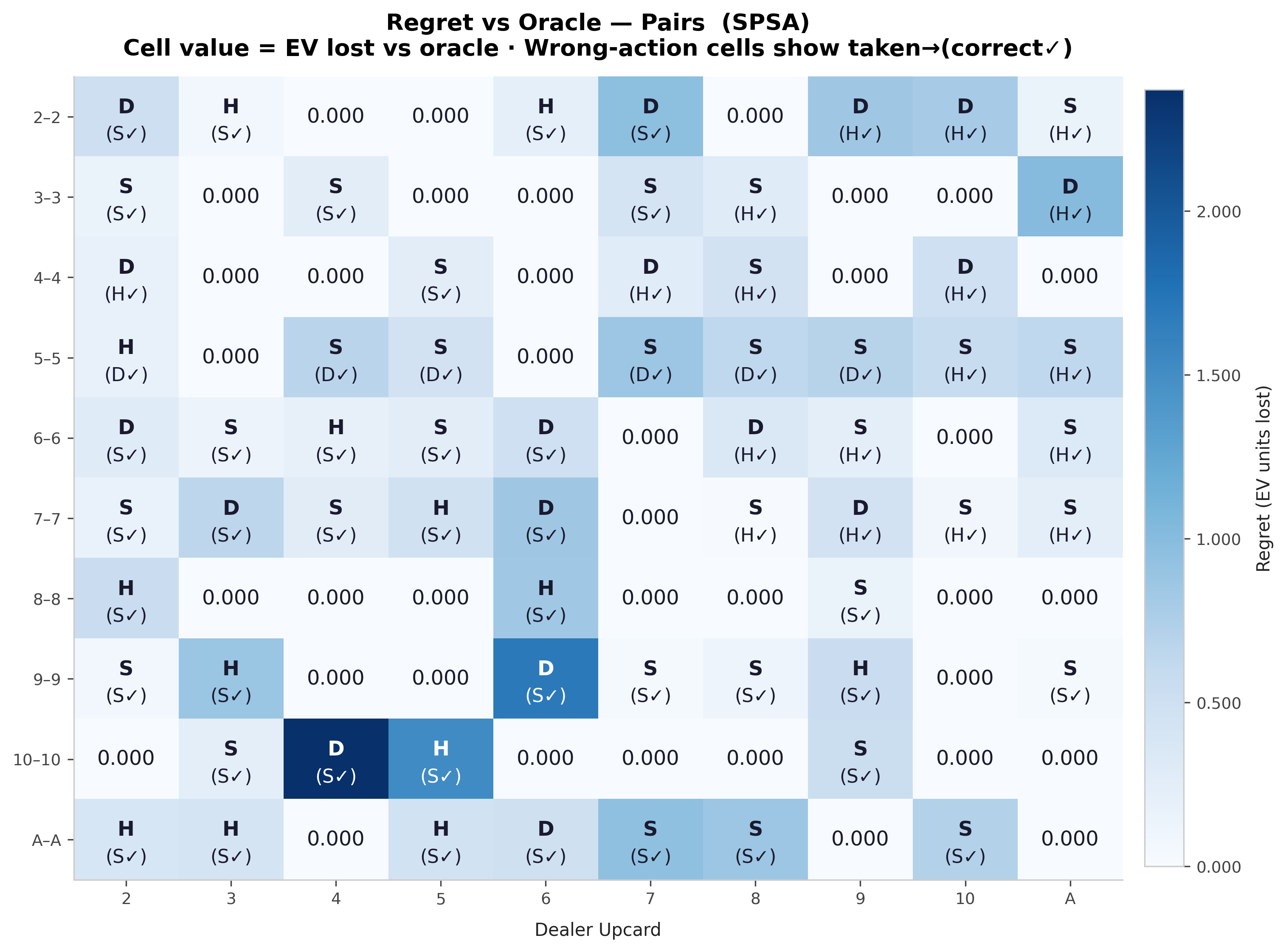}
\caption{SPSA regret heatmap for pair cells.}
\label{fig:spsa_pair}
\end{figure}

\begin{figure}[p]
\centering
\includegraphics[width=0.95\textwidth]{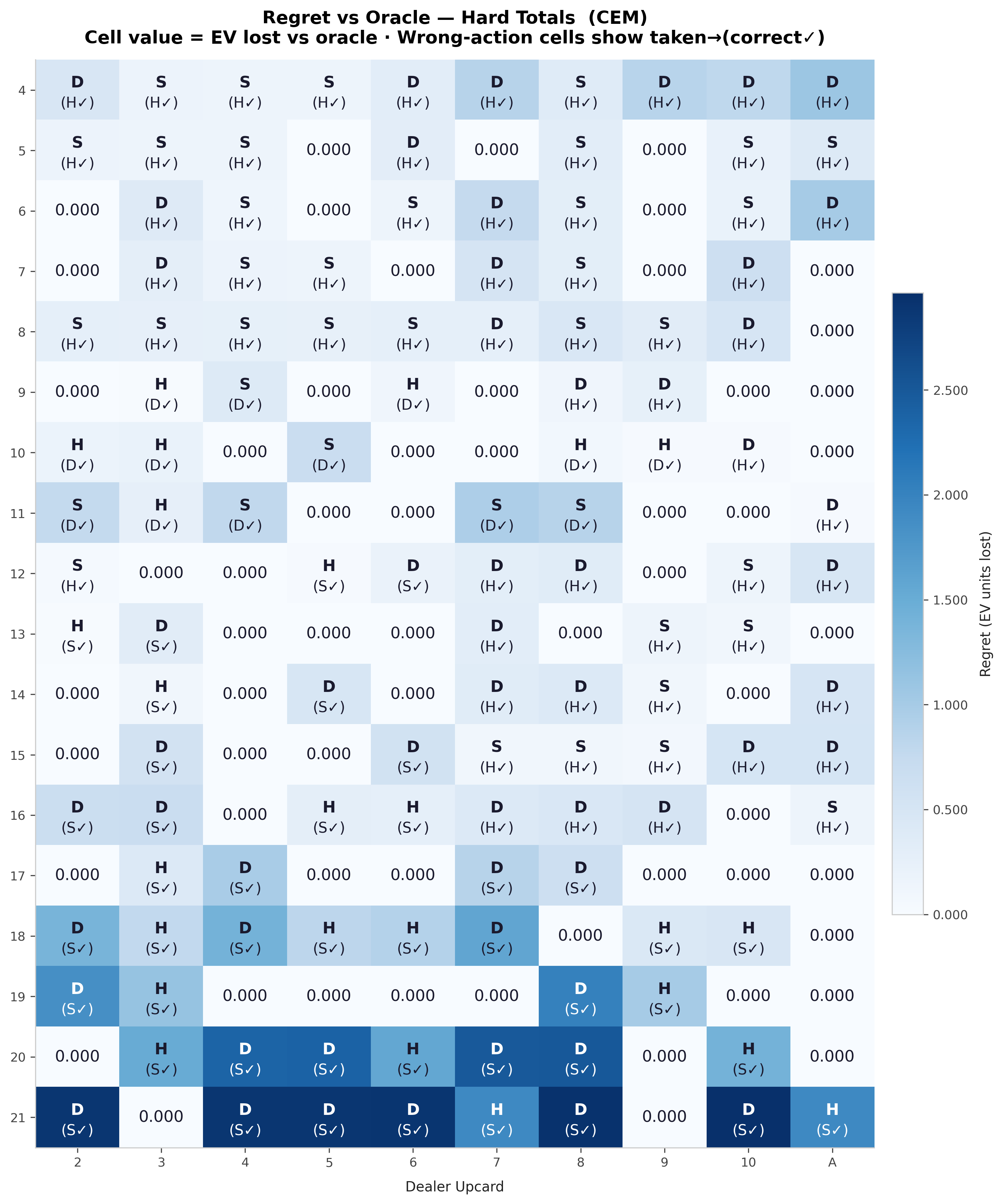}
\caption{CEM regret heatmap for hard totals.}
\label{fig:cem_hard}
\end{figure}

\begin{figure}[p]
\centering
\includegraphics[width=0.95\textwidth]{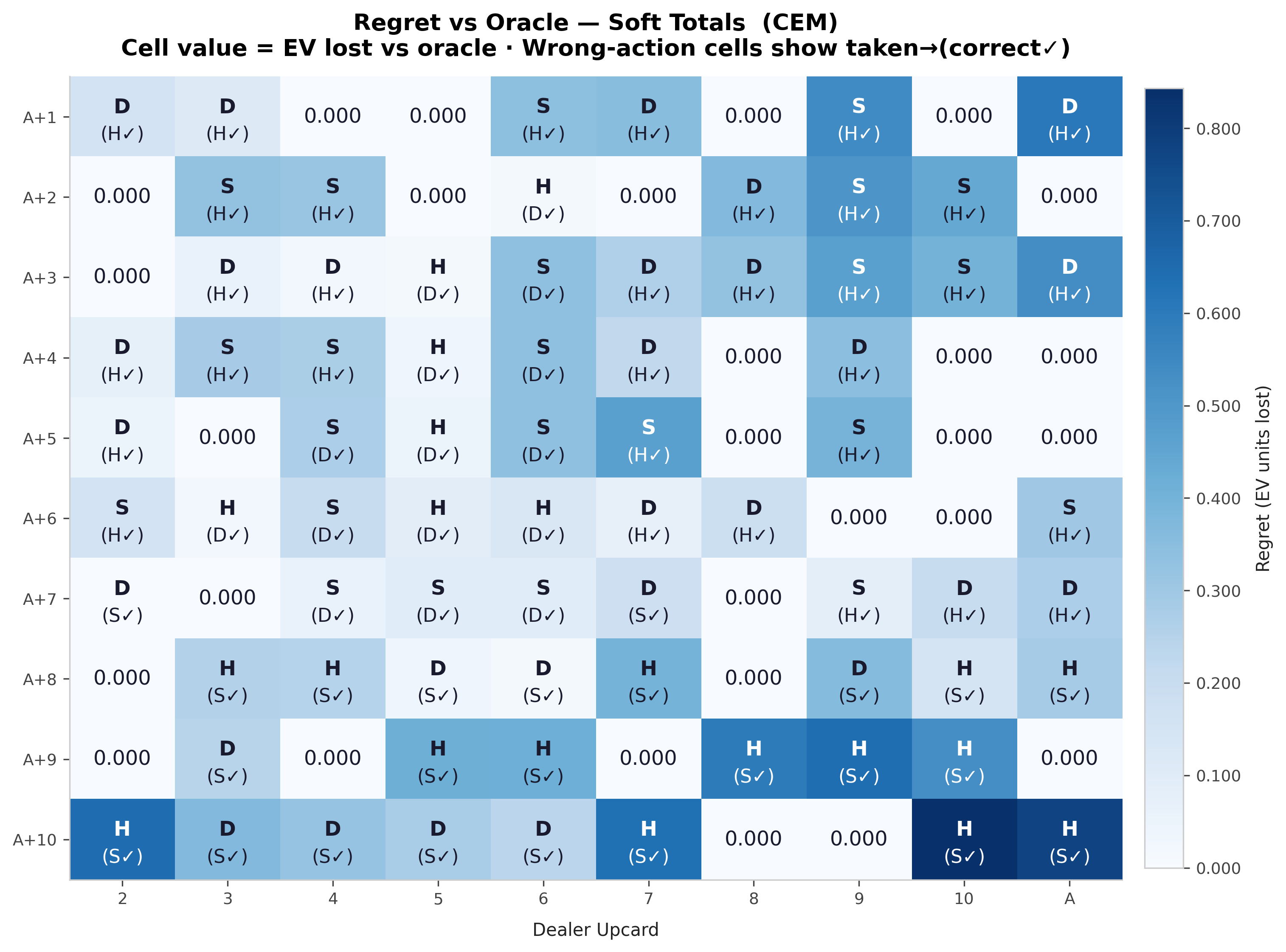}
\caption{CEM regret heatmap for soft totals.}
\label{fig:cem_soft}
\end{figure}

\begin{figure}[p]
\centering
\includegraphics[width=0.95\textwidth]{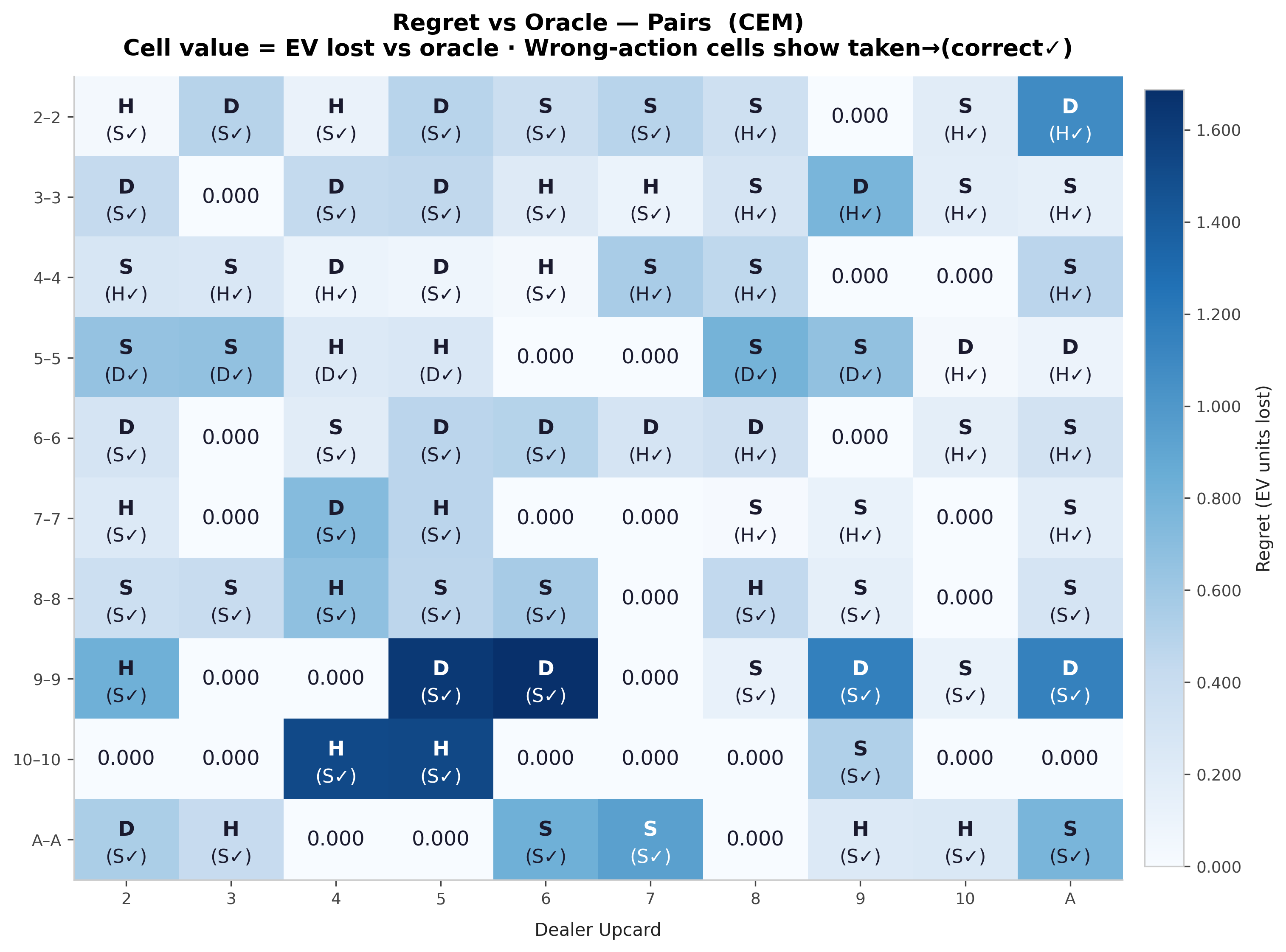}
\caption{CEM regret heatmap for pair cells.}
\label{fig:cem_pair}
\end{figure}

The hard-total regret maps show that SPSA leaves more residual error distributed
across the chart, while CEM concentrates errors on a narrower subset of cells at
higher magnitude.

\clearpage
\subsection*{S4. Proof of ruin probability monotonicity}

Let $W_n = W_0 + \sum_{t=1}^n b R_t$ be the bankroll random walk with constant bet
$b$ and i.i.d.\ $R_t$ with mean $e < 0$ and variance $\sigma^2 > 0$.  By the Wald
identity, $\mathbb{E}[W_\tau] = W_0 + b\,e\,\mathbb{E}[\tau]$ where $\tau$ is the
ruin time; the optional stopping theorem confirms that for the walk with absorbing
barrier at 0, $P(\tau < \infty) = 1$.  The ruin probability before $N$ fixed steps,
$P(\min_{1 \le t \le N} W_t \le 0)$, is monotone increasing in $b$ for fixed $W_0$,
because scaling $b \to \lambda b$ ($\lambda > 1$) is equivalent to scaling
$W_0 \to W_0/\lambda$, which strictly increases ruin probability under a
negative-drift random walk.  Therefore minimum bet minimizes ruin probability
for all $N$ and $W_0$.


\end{document}